
\typeout{IJCAI-16 Instructions for Authors}

\pdfoutput=1

\documentclass{article}
\usepackage{ijcai16}
\usepackage{graphicx,times,multirow} 
\usepackage{algorithm}
\usepackage{clrscode}
\usepackage{algpseudocode}
\usepackage{enumerate}
\usepackage{caption}
\usepackage{colortbl}
\usepackage{longtable}
\usepackage{array}
\usepackage{amsmath}
\usepackage{amssymb}
\usepackage{amsthm}
\usepackage{pst-blur}
\usepackage{bm}
\usepackage{pstricks-add}
\usepackage{changepage}
\usepackage{subfigure}
\usepackage{epstopdf}
\usepackage{fancyhdr} 
\definecolor{Blue}{rgb}{1.0,0.75,0.8}
\pagestyle{empty}
\usepackage{times}

\graphicspath{{images/}}

\newtheorem{Lemma}{Lemma}
\newtheorem{Definition}{Definition}




\title{High-dimensional Black-box Optimization \\ via Divide and Approximate Conquer}

\author{Peng Yang$^1$, Ke Tang$^1$\footnote{Corresponding author}, and Xin Yao$^2$\\
$^1$UBRI, School of Computer Science and Technology,\\ University of Science and Technology of China, Hefei, China, 230027\\
$^2$CERCIA, School of Computer Science,\\ University of Birmingham, Birmingham B15 2TT, U.K.\\
Emails: trevor@mail.ustc.edu.cn; ketang@ustc.edu.cn; x.yao@cs.bham.ac.uk}

\begin{document}

\maketitle

\begin{abstract}
Divide and Conquer (DC) is conceptually well suited to high-dimensional optimization by decomposing a problem into multiple small-scale sub-problems. 
However, appealing performance can be seldom observed when the sub-problems are interdependent. 
This paper suggests that the major difficulty of tackling interdependent sub-problems lies in the precise evaluation of a partial solution (to a sub-problem), which can be overwhelmingly costly and thus makes sub-problems non-trivial to conquer. 
Thus, we propose an approximation approach, named Divide and Approximate Conquer (DAC), which reduces the cost of partial solution evaluation from exponential time to polynomial time. 
Meanwhile, the convergence to the global optimum (of the original problem) is still guaranteed. 
The effectiveness of DAC is demonstrated empirically on two sets of non-separable high-dimensional problems.
  
\end{abstract}

\section{Introduction}

Developing Artificial Intelligence (AI) applications often encounters a key task of solving challenging optimization problems. 
Formally, it can be stated as: 

\quad\quad\quad\quad\quad\quad$\textbf{x}^* = {\arg\max}_{\textbf{x}\in \mathcal{X}}f(\textbf{x})$,
 
\noindent where $f:\mathcal{X}\rightarrow \mathbb{R}$ denotes a function on a bounded solution space $\mathcal{X}\subseteq \mathbb{R}^D$ and $\textbf{x}^*$ denotes the global optimum in $\mathcal{X}$. 
We consider $f$ as a black-box function that the problem information is completely unknown beforehand, where only the function value of $\textbf{x}\in \mathcal{X}$ can be directly provided if explicitly queried. 
Therefore, only derivative-free approaches can be brought to bear, which, however, often suffer from the curse of dimensionality if $D$ is considerably large. 

An intuitive idea to handle a high-dimensional optimization problem is to project its solution space onto lower dimensions, where traditional approaches perform well \cite{kaban2015toward}.
However, it is nontrivial to identify an appropriate projection.
Typical approaches in this category, e.g., Random Embedding techniques \cite{wang2013bayesian,qian2016scaling}, consider the high-dimensional problem having low effective dimensionality, for which a random projection would suffice to find the global optimal solution of a high-dimensional problem in a low-dimensional space.
Although these algorithms also showed appealing performance in case the low effective dimensionality assumption is mildly violated, their performance may not be satisfactory on the wider range of irreducible problems. 

Divide-and-Conquer (DC) is another general idea for tackling large-scale problems.
In case of high-dimensional black-box optimization, DC can be implemented by dividing the original problem into multiple sub-problems (say of dimensionality $d_i$, where $i=1,...,M$, and $d_i<D$) \cite{yang2008large}. 
For the $i$-th sub-problem, $\textbf{x}$ is optimized along $d_i$ dimensions, while its values on the other $D-d_i$ dimensions are fixed.
That is, each sub-problem concerns a low-dimensional subspace of the original solution space.
Applying an existing search method to the $i$-th sub-problem leads to a $d_i$-dimensional partial solution to the original high-dimensional problem.
The solution to original problem can be obtained by combining the partial solutions achieved on all sub-problems.

Given an appropriate sub-problems optimizer, the above-described DC strategy works well on the so-called separable problems, for which the global optimal optimum can be found by optimizing one dimension at a time regardless of the values taken on the other dimensions \cite{chen2010large}.
If this condition does not hold, the performance of DC heavily relies on the decomposition method \cite{omidvar2014cooperative}, which aims to divide a black-box high-dimensional problem in such a way that the global optimum can still be obtained by solving the sub-problems in a fully independent manner.
In the past few years, a large variety of decomposition methods have been proposed \cite{mahdavi2015metaheuristics}.
Despite the performance enhancement brought by them, none of these methods are guaranteed to achieve the desired sub-problems.
Meanwhile, a practical problem of interest may be fully non-separable such that the ideal decomposition mentioned above does not even exist.
Therefore, how to deal with (conquer) the interdependent sub-problems remains a challenge to DC in the context of high-dimensional optimization.

In this paper, we suggest that the major challenge of tackling interdependent sub-problems lies in the difficulty of evaluating the quality of a partial solution (to a sub-problem) during the search course.
To be specific, as the quality of a partial solution (to the original problem) depends on the values taken on the dimensions involved in other sub-problems, precisely evaluating a partial solution requires overwhelming computation cost, which increases exponentially with the number of interacting variables in other sub-problems. 
We propose an approximation approach for partial solution evaluation, which yields a novel framework, named Divide and Approximate Conquer (DAC).
With DAC, the computational cost increases polynomially with the number of solutions generated in each iteration while the convergence to global optimum (of the original problem) is still guaranteed.

The major difficulty of tackling interdependent sub-problems is analyzed in Section 2.
The proposed DAC is detailed in Section 3. 
Section 4 reports empirical studies of DAC on five synthetic high-dimensional optimization problems and the hyper-parameters fine-tuning task for multi-class SVM.
Section 5 concludes this work.

\section{Major challenge of dealing with interdependent sub-problems}
The DC strategy consists of three steps:
\begin{itemize}
\item[1)] (\textbf{Divide}) Decompose a problem into $M$ $d_i$-dimensional sub-problems, where $i=1,...,M$ and $d_i<D$;
\item[2)] (\textbf{Conquer}) Search the best partial solution in each sub-problem by applying an existing search approach;
\item[3)] Merge the best partial solutions obtained on all sub-problems as the final output. 
\end{itemize}

We restrict our discussions here to the Conquer phase.
Usually, a derivative-free search process in each sub-problem is guided by the solutions with better function values, despite a few exceptions \cite{Kirkpatrick671,tang2016negatively}. 
Before a $d_i$-dimensional partial solution is evaluated, it must be complemented to $D$-dimensional by fixing the values for the variables in other sub-problems. 
Specially, we call a vector of such fixed $D-d_i$ values as a \textit{complement} to a partial solution.
A partial solution will receive different function values with different complements.
Fortunately, the indeterminate function values of partial solutions will not influence the search process unless the rank of partial solutions changes, on which the search direction is actually determined. 
The rank of partial solutions may change if sub-problems are interdependent, which is defined as follows:

\begin{Definition}{\rm{(Interacting Variables) \cite{chen2010large}}}  \\
\label{de1}
Given a $D$-dimensional problem, its $i$-th and $j$-th variables are said to be interacting, if the rank of two partial solutions $x_i$ and $x_i'$ in the $i$-th dimension may change with different complements, e.g., $x_j$ and $x_j'$, in the $j$-th variable:
\begin{equation*}
\label{eq1}
\begin{split}
&\exists \textup{\textbf{x}},x_i',x_j':\\
&f(x_1,...,x_i,...,x_j,...,x_D)<f(x_1,...,x_i',...,x_j,...,x_D) \land \\
&f(x_1,...,x_i,...,x_j',...,x_D)>f(x_1,...,x_i',...,x_j',...,x_D). 
\end{split}
\end{equation*} 
\end{Definition}

\begin{Definition}{\rm{(Interdependent Sub-problems)}}\\
\label{de2}
Given two arbitrary sub-problems, they are said to be interdependent if at least one variable in a sub-problem is interacting with at least one variable in the other sub-problem. 
\end{Definition}

\noindent Intuitively, two interacting variables of the 2-D Schwefel function are depicted in Figure 1, where the rank of two partial solutions $x_1$ and $x_1'$ in the first dimension varies by fixing different values in the second dimension, i.e., $x_2$ and $x_2'$.

\begin{figure}\renewcommand{\captionfont}{\footnotesize}
			 \centering
			\begin{minipage}[htb]{1\linewidth}
				\centering
				\includegraphics[width=0.8 \linewidth, height=0.6 \linewidth]{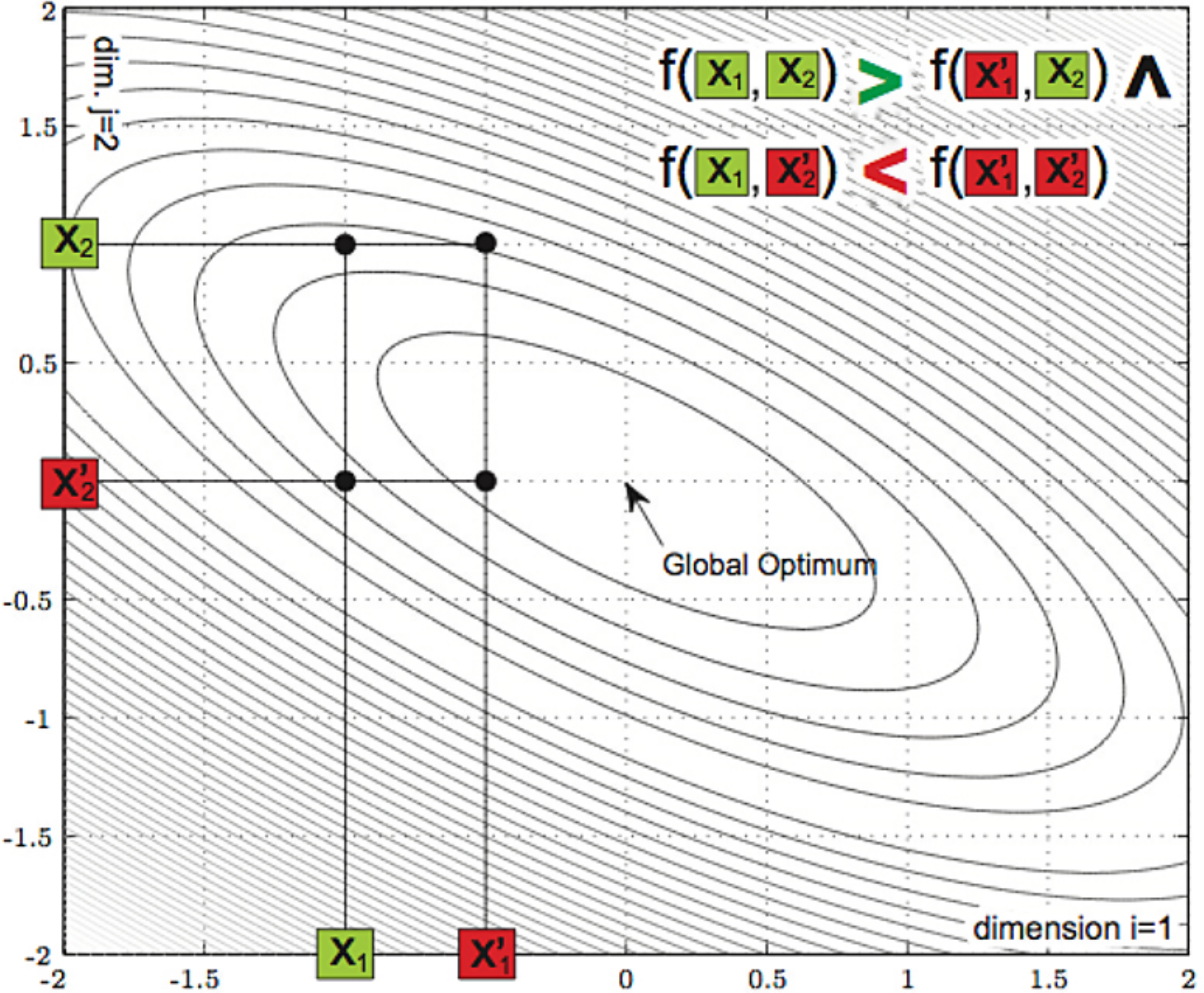}
				\caption{Two interacting variables of the 2-D Schwefel function.}
			\end{minipage}%
\end{figure}

If a problem is separable, where no interdependency exists between sub-problems, partial solutions can be complemented by arbitrary identical values in $\mathcal{X}$, without perturbing their rank. 
However, for many non-separable problems, the sub-problems are interdependent, where the rank of partial solutions significantly relies on their complements.
As a result, the search direction has a close relation to the choice of complements.
Hence, the choice of complements to partial solutions should be carefully addressed.
Otherwise, the search process in a sub-problem will run the risk of being misled, eventually resulting in an ineffective search. 

Unfortunately, we find that how to accurately complement partial solutions is a difficult optimization problem.

\begin{Lemma}{\rm{(The Difficulty of Accurately Complementing)}}\\
\label{le1}
Given a set of partial solutions in the $d_i$-dimensional $i$-th sub-problem, let $\widehat{D_i}$ be the set of all $D$ variables except the ones in the $i$-th sub-problem, $\lvert\widehat{D_i}\rvert$ be the cardinality of $\widehat{D_i}$, $P_{\widehat{D_i}(j)}$ be the probability of fixing a correct value for the $j$-th variable in $\widehat{D_i}$, and $P$ be the arithmetic mean of all $P_{\widehat{D_i}(j)},j=1,...,D-d_i$, then the probability of accurately complementing those partial solutions so that they can be correctly ranked, denoted as $P_{i}$, is:  

\begin{equation}
P_{i}=\prod_{j=1}^{\lvert\widehat{D_i}\rvert} P_{\widehat{D_i}(j)} \leq \bigg(\frac{\sum_{j=1}^{\lvert\widehat{D_i}\rvert} P_{\widehat{D_i}(j)}}{\lvert\widehat{D_i}\rvert}\bigg)^{\lvert\widehat{D_i}\rvert} = P^{D-d_i}
\label{eq2}
\end{equation}   
\end{Lemma}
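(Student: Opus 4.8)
The plan is to split the claimed chain into three pieces and verify each in turn: the product formula $P_i=\prod_j P_{\widehat{D_i}(j)}$, the central AM--GM inequality, and the final substitution of definitions that produces $P^{D-d_i}$.

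First I would establish the product formula. The crucial modeling observation is that, by the interacting-variables notion (Definition~\ref{de1}), the rank of the partial solutions in the $i$-th sub-problem can be flipped by any single variable of $\widehat{D_i}$ that interacts with the sub-problem. Hence a complement ranks the partial solutions correctly only if it assigns a correct value to \emph{every} variable in $\widehat{D_i}$ simultaneously. Treating the assignment of a correct value to variable $j$ as an event with probability $P_{\widehat{D_i}(j)}$ and assuming these events are independent, the probability that all of them occur at once factorizes into the product, giving $P_i=\prod_{j=1}^{\lvert\widehat{D_i}\rvert} P_{\widehat{D_i}(j)}$.

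Next I would invoke the arithmetic--geometric mean inequality on the nonnegative factors $P_{\widehat{D_i}(1)},\dots,P_{\widehat{D_i}(\lvert\widehat{D_i}\rvert)}$. Since the geometric mean is bounded above by the arithmetic mean,
\[
\left(\prod_{j=1}^{\lvert\widehat{D_i}\rvert} P_{\widehat{D_i}(j)}\right)^{1/\lvert\widehat{D_i}\rvert}\leq \frac{\sum_{j=1}^{\lvert\widehat{D_i}\rvert} P_{\widehat{D_i}(j)}}{\lvert\widehat{D_i}\rvert},
\]
and raising both sides to the power $\lvert\widehat{D_i}\rvert$ (a monotone operation on nonnegative reals) yields exactly the middle inequality of the statement. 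The final equality then follows by substituting the definition of $P$ as the arithmetic mean of the $P_{\widehat{D_i}(j)}$ and noting that $\lvert\widehat{D_i}\rvert=D-d_i$, because $\widehat{D_i}$ is precisely the set of all $D$ variables with the $d_i$ variables of the $i$-th sub-problem removed.

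The main obstacle I anticipate is not the AM--GM step, which is routine algebra, but rigorously justifying the product formula. In particular, the proof leans on two assumptions that deserve care: that \emph{every} complement variable must be fixed correctly for the rank to be preserved, and that the per-variable correctness events are mutually independent so that the joint probability factorizes. Making this precise requires pinning down what a ``correct value'' means (a value preserving the rank dictated by the global structure) and arguing that distinct variables contribute independently; I expect this modeling step, rather than the inequality, to carry the real weight of the lemma.
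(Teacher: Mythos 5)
Your proposal is correct and follows essentially the same route as the paper: the product formula $P_i=\prod_j P_{\widehat{D_i}(j)}$ obtained from independence of the per-variable correctness events (which the paper likewise extracts from Definitions~1 and~2), followed by the AM--GM inequality and the substitution $\lvert\widehat{D_i}\rvert=D-d_i$. Your closing caveat is well placed --- the paper's own proof asserts the independence and the all-variables-must-be-correct requirement just as informally as you do, so the modeling step you flag is exactly where the published argument also carries its weight without full rigor.
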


\begin{proof}
By Definition \ref{de1}, we learn that fixing the correct values for variables are independent events.
By Definition \ref{de2}, we know that variables in the same sub-problem are non-interacting.    
Thus, we can directly have 
$P_{i}=\prod_{j=1}^{\lvert\widehat{D_i}\rvert} P_{\widehat{D_i}(j)}$. 
Finally, according to the AM-GM Inequality, we have Eq.(\ref{eq2}).

\end{proof} 

\noindent Notice that, $P_{\widehat{D_i}(j)}=1$ if the $\widehat{D_i}(j)$-th variable is not interacting with any variable in the $i$-th sub-problem.
Lemma 1 shows that the computational cost of accurately complementing partial solutions increases exponentially with the number of variables that are interacting with the current sub-problem.
Hence, the interdependent sub-problems cannot be accurately conquered within a reasonable time budget. 

\section{Divide and Approximate Conquer}
\subsection{The Accurate Complement}
According to Lemma 1, only the brute-force method is applicable to accurately complement partial solutions on interdependent sub-problems. 
However, the required computational costs are beyond being acceptable.
On the other hand, it is an effective way to derive the mathematical formulation of a problem by observing its corresponding brute-force method, as such a method usually has to scan the whole solution space and thus reflects the problem characteristics naturally.

The core idea of the brute-force method is mathematically described as follow (\textit{ the maximization case is considered}):

\begin{equation}
\label{eq3}
\begin{split}
  &\forall \textbf{x}_i \in S_i: \\
  &f(\textbf{x}^*)=f(\textbf{x}_i^*, \textbf{x}_\textbf{r}^*)= \underset{\textbf{x}_\textbf{r} \in S_\textbf{r}}{\max}f(\textbf{x}_i^*,\textbf{x}_\textbf{r}) \geq \underset{\textbf{x}_\textbf{r} \in S_\textbf{r}}{\max}f(\textbf{x}_i,\textbf{x}_\textbf{r}),
\end{split}
\end{equation} 
  
\noindent where $\textbf{x}_i$ denotes a partial solution in the $i$-th sub-problem, $i=1,...,M$, and $\textbf{x}_\textbf{r}$ denotes its candidate complement, where $\textbf{r}=[1,...,i-1,i+1,...,M]$ denotes all the sub-problems except the $i$-th one. 
$S_i$ and $S_\textbf{r}$ denote the corresponding subspace of the solution space subjecting to $\lvert S_i \rvert \cdot \lvert S_\textbf{r} \rvert = \lvert S \rvert$, where $S \subseteq \mathcal{X}$ and $\textbf{x}^*=(\textbf{x}_i^*, \textbf{x}_\textbf{r}^*)$ is the optimal solution in $S$.

Eq.(\ref{eq3}) states a fact that the correct rank of partial solutions can be obtained by comparing their best function values
among all combinations with all possible complements. 
On this basis, we mathematically define the problem of accurately complementing partial solutions as follow:

\begin{Definition}{\rm{(The Accurate Complement)}}\\
\label{de3}
Given arbitrary $\textup{\textbf{x}}_i \in S_i$, a complement $\textrm{\textbf{x}}_\textup{\textbf{r}}^{\dagger} \in S_\textup{\textbf{r}}$ is said to be the accurate complement $\iff \textup{\textbf{x}}_\textup{\textbf{r}}^{\dagger} = \underset{\textup{\textbf{x}}_\textup{\textbf{r}} \in S_\textup{\textbf{r}}}{\arg\max}f(\textup{\textbf{x}}_i,\textup{\textbf{x}}_\textup{\textbf{r}}). $
\end{Definition}

Notice that, every partial solution has its own accurate complement.
To identify the accurate complement, the combinations of $\textbf{x}_i$ and all possible complements in $S_\textbf{r}$ should be evaluated by $f$, among which the complement with the largest function value is chosen. 

\subsection{DAC: an approximate approach to DC}
In fact, Definition 3 allows us to approximate the accurate complements of partial solutions with only a limited set of candidate complements $S_\textbf{r}' \subseteq S_\textbf{r}$.
That is,

\begin{equation}
\label{eq4}
\textbf{x}_\textbf{r}^{\dagger} = \underset{\textbf{x}_\textbf{r} \in S_\textbf{r}}{\arg\max}f(\textbf{x}_i,\textbf{x}_\textbf{r}) \succeq \widetilde{\textbf{x}}_\textbf{r}^{\dagger} = \underset{\textbf{x}_\textbf{r}' \in S_\textbf{r}' \subseteq S_\textbf{r}}{\arg\max}f(\textbf{x}_i,\textbf{x}_\textbf{r}').
\end{equation}

\noindent where $\textbf{x}_\textbf{r}^{\dagger} \succeq \widetilde{\textbf{x}}_\textbf{r}^{\dagger}$ means that $\textbf{x}_\textbf{r}^{\dagger}$ is more accurate than $\widetilde{\textbf{x}}_\textbf{r}^{\dagger}$, since,

\begin{equation}
\label{eq5}
\underset{\textbf{x}_\textbf{r} \in S_\textbf{r}}{\max}f(\textbf{x}_i,\textbf{x}_\textbf{r}) \geq \underset{\textbf{x}_\textbf{r}' \in S_\textbf{r}' \subseteq S_\textbf{r}}{\max}f(\textbf{x}_i,\textbf{x}_\textbf{r}').
\end{equation}  
 
\noindent Based on Eq.(\ref{eq5}), it is reasonable to assume that good \textit{approximate complements} will perturb the rank of partial solutions slightly.
Eq.(\ref{eq4}) thus gives rise to the proposed Divide and Approximate Conquer (DAC), as shown in Algorithm 1.

\begin{algorithm}[t] 
   \caption{DAC($f$, $T_{max}, N$)} 
   \label{DAC} 
   \begin{algorithmic}[1] 
       \State Randomly initialize $N$ solutions $\textbf{x}_{1:N}$.  
       \State Divide $f$ into $M$ sub-problems.
       \State \textbf{For} $t=1$ \textbf{to} $T_{max}$
           \State \quad \textbf{For} $i = 1$ \textbf{to} $M$ 
           \State \quad\quad  $\textbf{x}_{1:N;i}'= \textrm{SearchOperator}(\textbf{x}_{1:N;i})$.  
           \State \quad\quad \textbf{For} $j = 1$ \textbf{to} $N$ 
           \State \quad\quad\quad  $\widetilde{\textbf{x}}_{j;\textbf{r}}^{\dagger} = \underset{\textbf{x}_{\textbf{r}} \in \textbf{x}_{1:N;\textbf{r}}}{\arg\max}f(\textbf{x}_{j;i},\textbf{x}_{\textbf{r}})$.
           \State \quad\quad\quad  $\widetilde{\textbf{x}'}_{j;\textbf{r}}^{\dagger} = \underset{\textbf{x}_{\textbf{r}} \in \textbf{x}_{1:N;\textbf{r}}}{\arg\max}f(\textbf{x}'_{j;i},\textbf{x}_{\textbf{r}})$.
		   \State \quad\quad \textbf{EndFor}  
           \State \quad\quad  $\textbf{x}_{1:N;i} \leftarrow \{\textbf{x}_{1:N;i}; \textbf{x}'_{1:N;i}\mid \widetilde{\textbf{x}}_{1:N;\textbf{r}}^{\dagger}; \widetilde{\textbf{x}'}_{1:N;\textbf{r}}^{\dagger}\}$ .
           \State \quad\quad  $\textbf{x}_{1:N;\textbf{r}} \leftarrow \{\widetilde{\textbf{x}}_{1:N;\textbf{r}}^{\dagger}; \widetilde{\textbf{x}'}_{1:N;\textbf{r}}^{\dagger}\}$.
		   \State \quad \textbf{EndFor}
       \State \textbf{EndFor}  
       \State \textbf{Output} the best solution found so far.
   \end{algorithmic} 
\end{algorithm} 

DAC shares the same framework as the basic DC. 
The only difference between them is that: while complementing a partial solution, DAC always selects the complement associated with the largest function value among a given set of candidate complements.
Specifically, DAC works by first randomly initializing $N$ solutions $\textbf{x}_{1:N}$ (step 1).
The problem $f$ is decomposed into $M$ sub-problems with a certain decomposition strategy (step 2). 
After that, without loss of generality, let us consider the $i$-th sub-problem.
$N$ new partial solutions $\textbf{x}_{1:N;i}'$ are generated by applying some search operator to the current ones, i.e., $\textbf{x}_{1:N;i}$ (step 5).
To identify the approximate complement to the $j$-th partial solution $\textbf{x}_{j;i}$, $j=1,...,N$, all the combinations of $\textbf{x}_{j;i}$ and the vectors of partial solutions in other sub-problems $\textbf{x}_{1:N;\textbf{r}}$ will be evaluated by $f$.
The vector associated with the largest function value is chosen as the approximate complement $\widetilde{\textbf{x}}_{j;\textbf{r}}^{\dagger}$ to $\textbf{x}_{j;i}$ (step 7). 
The same strategy is used to obtain the approximate complement $\widetilde{\textbf{x}'}_{j;\textbf{r}}^{\dagger}$ to the $j$-th new partial solution $\textbf{x}_{j;i}'$ (step 8).
After that, according to a certain selection criterion, $N$ partial solutions will be remained for the next iteration (step 10).
Notice that, the selection of a partial solution is conditioned by its corresponding approximate complement. 
At last, for the $j$-th selected partial solution $\textbf{x}_{j;i}$, its corresponding partial solutions in the rest sub-problems $\textbf{x}_{j;\textbf{r}}$ will be replaced with its approximate complement for further optimizing (step 11). 

As a result, DAC consumes $2MN^2$ Function Evaluations (FEs) in each iteration, which is a significant reduction to the brute force method. 
Meanwhile, albeit the computational time is cut down, the convergence of DAC is still guaranteed.


\begin{Lemma}{\rm{(The Convergence of DAC)}}\\
\label{le2}
Given a search algorithm that can converge to the global optimum of each sub-problem (regarded as independent problems), DAC can approximately converge to the global optimum $\textup{\textbf{x}}^*$ of the original problem.
\end{Lemma}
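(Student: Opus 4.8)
The plan is to convert each interdependent sub-problem into an equivalent independent one, invoke the per-sub-problem convergence hypothesis on the resulting objective, and then control the error introduced by replacing the accurate complement with its approximation. First I would define, for the $i$-th sub-problem, the induced objective $g_i(\textbf{x}_i)=\max_{\textbf{x}_\textbf{r}\in S_\textbf{r}}f(\textbf{x}_i,\textbf{x}_\textbf{r})$, i.e. the value a partial solution attains when paired with its accurate complement of Definition \ref{de3}. By Eq.(\ref{eq3}), $g_i$ is maximised exactly at $\textbf{x}_i^*$ and $\max_{\textbf{x}_i}g_i(\textbf{x}_i)=f(\textbf{x}^*)$, so ranking partial solutions by $g_i$ is free of interdependency: the $i$-th sub-problem, equipped with objective $g_i$, becomes a genuinely independent problem whose global optimum is the correct partial solution $\textbf{x}_i^*$.

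The idealised case then follows immediately. If DAC were allowed to score every partial solution by its accurate complement, the fitness it assigns would coincide with $g_i$, so the hypothesis that the embedded search operator converges to the global optimum of each independent sub-problem would yield $\textbf{x}_i^*$ in every sub-problem; merging across $i=1,\dots,M$ reconstructs $\textbf{x}^*$. The substance of the lemma is therefore to show that DAC's approximate scoring $\widetilde{g}_i(\textbf{x}_i)=\max_{\textbf{x}_\textbf{r}\in \textbf{x}_{1:N;\textbf{r}}}f(\textbf{x}_i,\textbf{x}_\textbf{r})$, which uses only the current population of complements, drives the search to the same limit. By Eq.(\ref{eq5}) we have $\widetilde{g}_i\le g_i$ pointwise, with the gap shrinking as the retained complements $\textbf{x}_{1:N;\textbf{r}}$ approach $\textbf{x}_\textbf{r}^*$.

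To close the argument I would track the best complete solution $\textbf{x}^{(t)}$ held by the population and show that $f(\textbf{x}^{(t)})$ is non-decreasing and bounded above by $f(\textbf{x}^*)$, hence convergent. Monotonicity follows because step 10 retains the incumbent partial solutions alongside the newly generated ones while scoring both through their best available complement, so the best attainable value never drops; boundedness is just optimality of $\textbf{x}^*$. It then remains to identify the limit: at any fixed point of DAC no sub-problem search can improve, which by the convergence hypothesis forces each retained partial solution to maximise its approximate objective, while step 11's feedback of the approximate complements back into the population forces those complements to be mutually consistent. The hard part is exactly this coupling — the approximate complement for sub-problem $i$ is only as good as the partial solutions currently held for the other sub-problems, which in turn depend on their own complements — so ruling out a spurious coordinate-ascent fixed point strictly below $f(\textbf{x}^*)$ is where the real work lies, and it is also what the word ``approximately'' in the statement is meant to absorb. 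I would attack it by arguing that at a consistent fixed point the population already contains each $\textbf{x}_\textbf{r}^*$, so $\widetilde{g}_i=g_i$ for every $i$, collapsing the approximate problem onto the idealised one of the previous paragraph and forcing the limit to be $f(\textbf{x}^*)$.
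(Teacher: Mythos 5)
Your third paragraph is, in substance, the paper's entire proof: the per-sub-problem chain you describe is exactly Eq.(\ref{eq6}) (current value $\leq$ value under best-of-population complement $\leq$ value of the selected new partial solution under best-of-population complement), repeated over $i=1,\dots,M$ to get the monotone non-decrease of $f(\textbf{x}_j^t)$ in Eq.(\ref{eq7}), followed by an enumeration of the equality cases and a direct appeal to the hypothesis on the sub-problem optimizer, with everything unresolved absorbed into the word ``approximately.'' Your first two paragraphs — defining the induced objective $g_i(\textbf{x}_i)=\max_{\textbf{x}_\textbf{r}\in S_\textbf{r}}f(\textbf{x}_i,\textbf{x}_\textbf{r})$ so that each sub-problem becomes a genuinely independent problem maximized at $\textbf{x}_i^*$, and observing $\widetilde{g}_i\leq g_i$ pointwise — are a cleaner formalization than the paper gives; the paper gestures at this via Eq.(\ref{eq3}) and Definition \ref{de3} but never states the reduction explicitly.

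The genuine gap is your closing move. The assertion that ``at a consistent fixed point the population already contains each $\textbf{x}_\textbf{r}^*$'' is not argued and is false in general: nothing in steps 7--11 of Algorithm 1 drives the $N$ retained complements toward the accurate ones. The population can reach a state where every partial solution $\textbf{x}_{j;i}$ maximizes $f(\cdot,\textbf{x}_\textbf{r})$ over the finitely many complements currently held, and every complement is itself composed of such partial solutions — a Nash-like coordinate-ascent equilibrium at which $\widetilde{g}_i<g_i$ for all $i$ and the dynamics stall strictly below $f(\textbf{x}^*)$. Definition \ref{de1} is precisely the statement that such rank inversions under wrong complements occur on non-separable $f$, so these spurious equilibria exist generically; your own fourth paragraph correctly identifies this as ``where the real work lies,'' but the proposed resolution is circular (it assumes the fixed point is ``consistent'' in exactly the sense that needs proving). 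Note that the paper does not close this gap either — its conclusion is only that $f(\textbf{x}^*)$ can be ``approximately approached,'' supported by the remark that the approximate complement coincides with the solution's own complement with probability at least $\frac{1}{N}$ — so the honest fix is to weaken your conclusion to what your sound portion (monotonicity plus boundedness, hence convergence of the value sequence, with error controlled by the quality of the complement population) actually establishes, rather than to claim exact identification of the limit with $f(\textbf{x}^*)$.
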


\begin{proof}
With out loss of generality, let us consider the function values of the $j$-th solution at the $t$-th and $t+1$-th iteration, i.e., $f(\textbf{x}_j^{t})$ and $f(\textbf{x}_j^{t+1})$.
For the $i$-th sub-problem, we have:

\begin{equation}
\label{eq6}
f(\textbf{x}^t_{j;i},\textbf{x}^t_{j;\textbf{r}}) \leq \underset{\textbf{x}^t_\textbf{r} \in \textbf{x}^t_{1:N;\textbf{r}}}{\max}f(\textbf{x}^t_{j;i},\textbf{x}^t_{\textbf{r}}) \leq \underset{\textbf{x}^t_\textbf{r} \in \textbf{x}^t_{1:N;\textbf{r}}}{\max}f(\textbf{x}^{t+1}_{j;i},\textbf{x}^t_{\textbf{r}}).
\end{equation} 

\noindent where the first '$\leq$' indicates the procedure of identifying the approximate complements for the current partial solutions (step 7 in Algorithm 1), and the second '$\leq$' represents the procedures of generating new partial solutions (step 5 in Algorithm 1), identifying their approximate complements (step 8 in Algorithm 1), and selecting better ones from candidate partial solutions, conditioned by their approximate complements (step 10 in Algorithm 1).

Then by repeating Eq.(\ref{eq6}) for $i=1,...,M$, we have that:

\begin{equation}
\label{eq7}
f(\textbf{x}^t_{j;1},...,\textbf{x}^t_{j;M}) \leq f(\textbf{x}^{t+1}_{j;1},...,\textbf{x}^{t+1}_{j;M}),
\end{equation} 

\noindent which means the function value of the $j$-th solution $f(\textbf{x}_j^{t})$ monotonically increases with the iteration index $t$.

Note that, the equality of Eq.(\ref{eq7}) holds in two cases:
\begin{itemize}
\item[1)] The approximate complement of a partial solution happens to be its corresponding partial solutions in the rest sub-problems, i.e., $\textbf{x}_{j;\textbf{r}}=\widetilde{\textbf{x}}_{j;\textbf{r}}^{\dagger}$;
\item[2)] The search algorithm fails to produce new better solutions, i.e., $\underset{\textbf{x}_\textbf{r} \in S_\textbf{r}}{\max}f(\textbf{x}_{j;i},\textbf{x}_\textbf{r}) \geq \underset{\textbf{x}_\textbf{r} \in S_\textbf{r}}{\max}f(\textbf{x}'_{j;i},\textbf{x}_\textbf{r})$.
\end{itemize}

\noindent The first case actually explains the term "approximate" in DAC, as it happens at a probability of at least $\frac{1}{N}$. 
Hence, if the sub-problems optimizer of DAC can optimally solve each sub-problem separately, the global optimum value $f(\textbf{x}^*)$ can be approximately approached by DAC.
\end{proof}

\subsection{DAC-HC: an instantiation of DAC}
An instantiation of DAC is presented to illustrate the detail steps of a DAC algorithm and for further empirical studies.

To instantiate DAC, both the decomposition strategy and the sub-problems optimizer should be specified. 
In order to highlight the advantages of the DAC framework further in empirical studies, the improvement of performance introduced by these two specified components should be kept to minimal.
On this basis, we first decompose a problem $f$ via random grouping \cite{yang2008large}. 
That is, in the beginning of each iteration, $M$ equal-sized sub-problems are randomly generated. 
For the sub-problems optimizer, a Parallel Hill Climbing (PHC) method is employed, which thus yields the DAC-Hill Climbing (DAC-HC).
The DAC-HC conducts $N$ RLS processes on each sub-problem.
Specifically, at each iteration of the $i$-th sub-problem, the $j$-th RLS produces one new partial solution $\textbf{x}'_{j;i}$ by applying the Gaussian mutation operator to the current partial solution $\textbf{x}_{j;i}$, using Eq.(\ref{eq8}):

\begin{equation}
\label{eq8}
\textbf{x}'_{j;i} = \textbf{x}_{j;i} + \textbf{I} \cdot \mathcal{N}(0,\sigma_{j;i}).
\end{equation} 

\noindent where $\mathcal{N}(0,\sigma_{j;i})$ denotes a Gaussian random variable with zero mean and standard deviation $\sigma_{j;i}$, and $\textbf{I}$ is the identity matrix of size $d_i$. 
Generally, the value of $\sigma_{j;i}$ represents the search step-size that can be adaptively varied during the search and may also be distinct over RLSs or even dimensions. 
To keep it simple, all RLSs in DAC-HC are initially set to the same search step-size, i.e., 1.00. 
After that, each search step-size is adapted at every iteration in terms of the 1/5 successful rule \cite{kern2004learning}, using Eq.(\ref{eq9}):  

\begin{equation}
\label{eq9}
\sigma_{j;i} = \sigma_{j;i} \times \textrm{exp}^{\frac{1}{\sqrt{D+1}}}(\mathbb{I}_{f(\textbf{x}'_{j;i},\widetilde{\textbf{x}}_{j;\textbf{r}}^{\dagger})\geq f(\textbf{x}_{j;i},\widetilde{\textbf{x}}_{j;\textbf{r}}^{\dagger})}-\frac{1}{5}),
\end{equation} 

\noindent where $\mathbb{I}_{a}$ is a indicator function that returns 1 if $a$ is true and 0 otherwise.

During the selection procedure, each new partial solution in PHC only competes with its corresponding old one for survival. 
Based on this one-on-one relation, we further reduce the FEs consumption of DAC-HC to a half of DAC, i.e., $MN^2$. 
This is conducted by letting two competing partial solutions share the same approximate complement.
The reason behind this is that, by adopting RLSs with small search step-sizes, pairwise partial solutions can be close to each other in the solution space, in which case their respective approximate complements may also be similar.
Lastly, the pseudo-code of DAC-HC is given in Algorithm 2 for illustration.

\begin{algorithm}[t] 
   \caption{DAC-HC($f$, $T_{max}, N, M$)} 
   \label{DAC} 
   \begin{algorithmic}[1] 
       \State Randomly initialize $N$ solutions $\textbf{x}_{1:N}$.  
       \State \textbf{For} $t=1$ \textbf{to} $T_{max}$
           \State \quad Randomly divide $f$ into $M$ equal-sized sub-problems.
           \State \quad \textbf{For} $i = 1$ \textbf{to} $M$ 
           \State \quad\quad \textbf{For} $j = 1$ \textbf{to} $N$
           \State \quad\quad\quad  $\textbf{x}_{j;i}'= \textbf{x}_{j;i} + \textbf{I} \cdot \mathcal{N}(0,\sigma_{j;i})$. 
           \State \quad\quad\quad  $\widetilde{\textbf{x}}_{j;\textbf{r}}^{\dagger} = \underset{\textbf{x}_{\textbf{r}} \in \textbf{x}_{1:N;\textbf{r}}}{\arg\max}f(\textbf{x}_{j;i},\textbf{x}_{\textbf{r}})$.
           \State \quad\quad\quad  $\sigma_{j;i} = \sigma_{j;i} \times \textrm{exp}^{\frac{1}{\sqrt{D+1}}}(\mathbb{I}_{f(\textbf{x}'_{j;i},\widetilde{\textbf{x}}_{j;\textbf{r}}^{\dagger})\geq f(\textbf{x}_{j;i},\widetilde{\textbf{x}}_{j;\textbf{r}}^{\dagger})}-\frac{1}{5})$.
           \State \quad\quad\quad  $\textbf{x}_{j;i} \leftarrow \{\textbf{x}_{j;i}, \textbf{x}'_{j;i}\mid \widetilde{\textbf{x}}_{j;\textbf{r}}^{\dagger}\}$.
           \State \quad\quad\quad  $\textbf{x}_{j;\textbf{r}} \leftarrow \widetilde{\textbf{x}}_{j;\textbf{r}}^{\dagger}$.
           \State \quad\quad \textbf{EndFor}
		   \State \quad \textbf{EndFor}
       \State \textbf{EndFor}  
       \State \textbf{Output} the best solution found so far.
   \end{algorithmic} 
\end{algorithm} 

\section{Empirical Studies}
DAC is proposed for solving non-separable high-dimensional optimization problems.
That is where the empirical studies should concentrate on to verify the effectiveness of DAC-HC.
For this purpose, two sets of non-separable high-dimensional optimization problems are employed. 

\subsection{Varied numbers of interacting variables tests}
The first set of problems is based on the fully non-separable functions, i.e., Schwefel's 1.2 and Rosenbrock \cite{Tang09benchmarkfunctions}, which are formulated as: $f_{\textrm{sch}}(\textbf{x})=\sum^D_{i=1}(\sum^i_{j=1}x_j)^2$ and $f_{\textrm{ros}}(\textbf{x})=\sum^{D-1}_{i=1}[100(x_i^2-x_{i+1})^2+(x_i-1)^2]$. 
In these two functions, all variables are interacting.
Meanwhile, it has also been observed that, in many real-world problems, only parts of variables are interacting \cite{friesen2015recursive}.
Hence, it is a necessity to test DAC-HC with varied numbers of interacting variables.
For this purpose, we further consider three problems that combine Schwefel's 1.2 function and the fully separable sphere function, i.e., $f_{\textrm{sph}}(\textbf{x})=\sum^D_{i=1}x_i^2$, in different formations.
The dimensionality is set to 1000 for all 5 problems. 
All variables are randomly perturbed to avoid any potential bias.
All problems are expected to be minimized to the global optimal value 0.00. 
Specifically, $f_1(\textbf{x})$ consists of a group of 50 interacting variables and 950 independent variables. 
$f_2(\textbf{x})$ has 10 groups of 50 interacting variables and 500 independent variables.
$f_3(\textbf{x})$ and $f_5(\textbf{x})$ compose of 20 groups of 50 interacting variables.
$f_4(\textbf{x})$ involves a group of 1000 interacting variables.
The detailed formulations are listed in Table 1. 

{\renewcommand\baselinestretch{1}\selectfont
\begin{table}[!tbp]\footnotesize
\centering  
\caption{The formulations of 5 tested functions. $\textbf{z}=\textbf{x}-\textbf{o}$ is a shifted solution and $\textbf{o}$ is the global optimum. $\textbf{P}$ is a random permutation of $[1,...,D]$ and $D=1000,m=50$.}
\begin{tabular}{l} 
\hline\hline
$f_1(\textbf{x})=f_{\textrm{sch}}(\textbf{z}(P_1:P_m))*10^6+f_{\textrm{sph}}(\textbf{z}(P_{m+1}:P_D))$ \\ 
$f_2(\textbf{x})=\sum_{k=1}^{\frac{D}{2m}}f_{\textrm{sch}}(\textbf{z}(P_{(k-1)*m+1}:P_{k*m}))$\\
\quad\quad\quad\quad$+f_{\textrm{sph}}(\textbf{z}(P_{\frac{D}{2}+1}:P_D))$ \\ 
$f_3(\textbf{x})=\sum_{k=1}^{\frac{D}{m}}f_{\textrm{sch}}(\textbf{z}(P_{(k-1)*m+1}:P_{k*m}))$ \\ 
$f_4(\textbf{x})=f_{\textrm{sch}}(\textbf{z})$ \\ 
$f_5(\textbf{x})=\sum_{k=1}^{\frac{D}{m}}f_{\textrm{ros}}(\textbf{z}(P_{(k-1)*m+1}:P_{k*m}))$ \\ 
\hline\hline
\end{tabular}
\end{table}
\par}

On these five problems, two groups of comparisons are conducted for different purposes.

\subsubsection{Advantages of DAC-HC over existing approaches}
In the first group of comparison, DAC-HC is compared with  CMA-ES \cite{hansen2001completely}, RESOO \cite{qian2016scaling}, and DECC-I \cite{omidvar2014cooperative}, which are representatives of three basic ideas for high-dimensional optimization: the straightforward method, dimensionality reduction, and DC, respectively. 
Specifically, CMA-ES is widely endorsed as a powerful global optimizer that has been applied in many aspects. 
Here the basic version of CMA-ES is utilized.
RESOO is a recently proposed approach built on the Random Embedding for reducing dimensionality. 
It should be noted that, DECC-I is not an algorithm for black-box optimization. 
It is an ideal approach that perfectly decomposes the problems using the priori knowledge of functions.  
Hence, all sub-problems of DECC-I are independent, while it is not the case for DAC-HC. 
Besides, the sub-problems of DECC-I are optimized by a variant of Differential Evolution \cite{yang2008self}, which has been empirically observed more advanced than the employed RLSs \cite{tang2016negatively}.
On this basis, if DAC-HC outperforms DECC-I, it is reasonable to infer that the proposed DAC facilitates DC on non-separable high-dimensional optimization problems.

All algorithms are repeated 25 runs on each problem to diminish the noise introduced by their randomized search essence.
The time budget for each run is set to 3e6 FEs.
CMA-ES is parameterless that no parameter needs to be specified.
For RESOO, after some coarse-tuning, the probability $\eta$ is set to 1/3, the restart times is set to 10, and the reduced dimensionality is set to 100.
For DECC-I, the only parameter, i.e., population size $N$, is set to 100 as Omidvar \textit{et al.} \shortcite{omidvar2014cooperative} suggested.
For DAC-HC, two parameters should be specified, i.e., the number of RLSs $N$ and the number of sub-problems $M$.
Recall that, the parameter $N>1$ generally influences the approximate ability of DAC.
To test the extreme case of DAC-HC, it is set to 2.
To gain a relatively fair comparison with RESOO, we set $M=10$ so that each sub-problems optimizer faces a 100-dimensional problem as RESOO does. 

The mean and standard derivation of the final outputs in 25 runs are shown in Table 2. 
A gray cell indicates an algorithm achieves the best mean value on a problem, while a light gray cell indicates a second place.
DAC-HC outperforms all the compared algorithms on $f_1,f_4$ and $f_5$.
On $f_2$ and $f_3$, though slightly inferior to CMA-ES, DAC-HC performs significantly better than DECC-I and RESOO.
Since DAC-HC dominates DECC-I on all five problems, the effectiveness of DAC for promoting DC on non-separable high-dimensional optimization problems is confirmed.
Besides, Lemma \ref{le1} is verified by observing that the solution qualities of DAC-HC deteriorates as the number of interacting variables increases.
RESOO performs poorly because the tested problems are irreducible.

{\renewcommand\baselinestretch{1}\selectfont
\begin{table}[tbp]\footnotesize
\centering  
\caption{The mean and standard derivation of final outputs.}

\begin{tabular}{cccccc}
\hline\hline
\multicolumn{2}{c}{\textbf{Function}} &\textbf{CMA-ES} &\textbf{RESOO} &\textbf{DECC-I} &\textbf{DAC-HC}\\ \hline
\multirow{2}*{$f_1$} 
&Mean   &1.35e+09  &2.52e+11  &\cellcolor{lightgray} 2.97e+01  &\cellcolor{gray} 0.00e+00  \\ 
&Std    &3.29e+08  &3.62e+10  &8.59e+01  &0.00e+00  \\ 

\multirow{2}*{$f_2$} 
&Mean   &\cellcolor{gray} 0.00e+00  &1.28e+07  &1.48e+03  &\cellcolor{lightgray} 1.55e+00  \\ 
&Std    &0.00e+00  &9.41e+05  &4.28e+02  &1.25e+00  \\ 

\multirow{2}*{$f_3$} 
&Mean   &\cellcolor{gray} 0.00e+00  &3.62e+07  &3.91e+04  &\cellcolor{lightgray} 7.78e+02  \\ 
&Std    &0.00e+00  &2.89e+06  &2.75e+03  &7.12e+02  \\ 

\multirow{2}*{$f_4$} 
&Mean   &2.87e+06  &7.80e+07  &\cellcolor{lightgray} 1.74e+06  &\cellcolor{gray} 3.93e+05  \\ 
&Std    &6.61e+05  &7.10e+06  &9.54e+04  &2.52e+04  \\ 

\multirow{2}*{$f_5$} 
&Mean   &3.36e+03  &1.41e+12  &\cellcolor{lightgray} 1.17e+03  &\cellcolor{gray} 1.13e+03  \\ 
&Std    &1.81e+03  &8.02e+09  &9.66e+01  &2.32e+02  \\ \hline
 \hline
\end{tabular}
\end{table}
\par}

\subsubsection{Empirical support to the convergence of DAC}
In the second group of comparison, the PHC with 2 RLSs is compared.
The only difference between PHC and DAC-HC is that PHC complements a partial solution $\textbf{x}_{j;i}$ merely with the corresponding partial solutions in the rest sub-problems $\textbf{x}_{j;\textbf{r}}$.
On this basis, PHC does not satisfy Eq.(\ref{eq6}), and its convergence is not guaranteed.
The experimental protocol is set the same to the first group of comparison.

The convergence rates of both algorithms are shown in Figure 2, where the x-axis denotes the FEs and y-axis denotes the logarithm of function values.
It can be seen that, DAC-HC always converges faster than PHC.
Specially, log-linear convergence of DAC-HC is observed on the first two sphere function based problems.
It should be noted that, the employed RLS has also been theoretically proved to converge log-linearly on the sphere function \cite{jebalia2008log}.
This coincidence actually supports the convergence of DAC stated in Lemma \ref{le2}.   
Comparatively, the convergence rates of PHC are heavily retarded due to the unfit complements to partial solutions.

\begin{figure}\renewcommand{\captionfont}{\footnotesize}
			 \centering
			\begin{minipage}[htb]{1\linewidth}
				\centering
				\includegraphics[width=3in,height=3in]{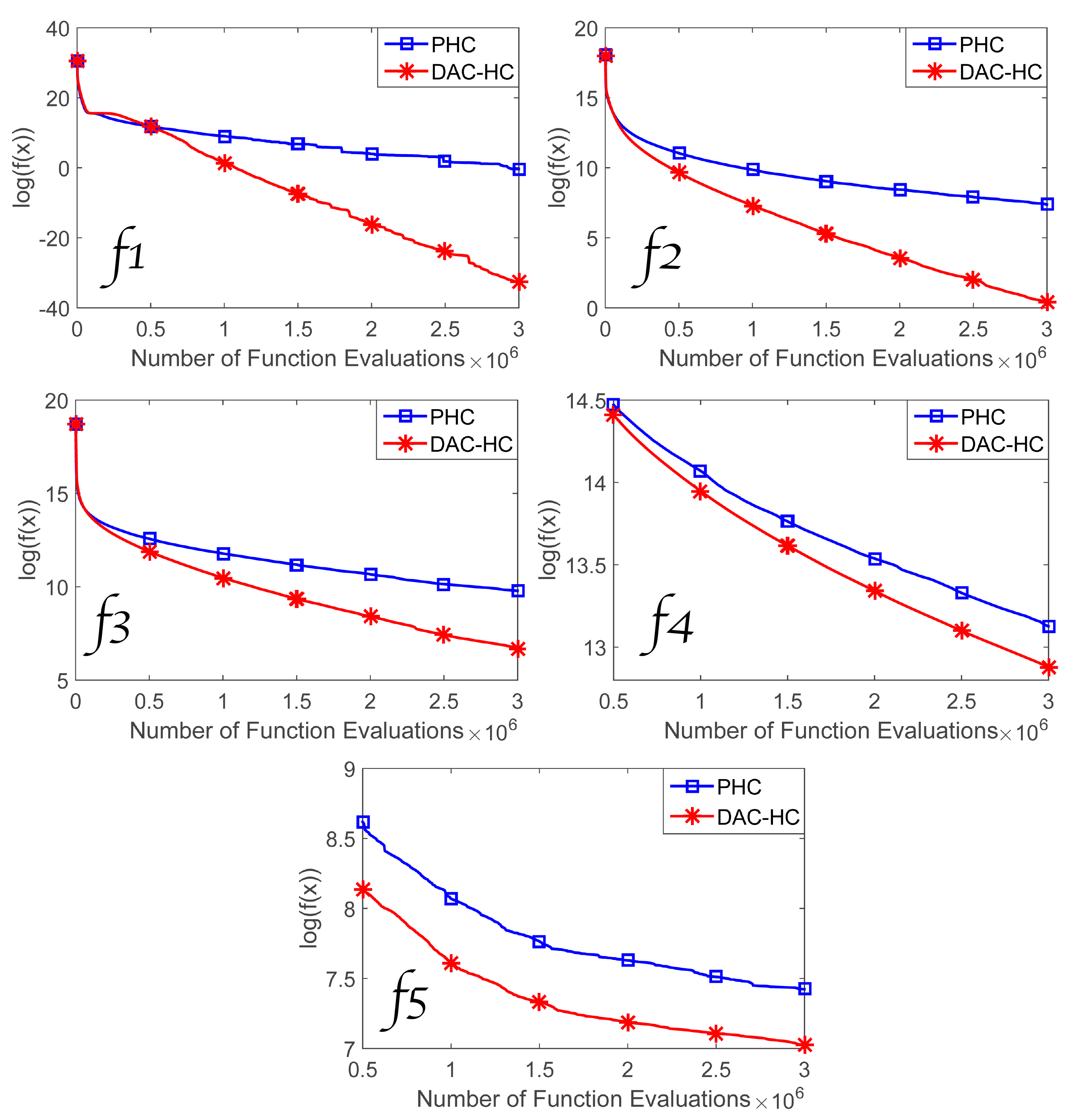}
				\caption{The convergence rates of DAC-HC and PHC. The x-axis denotes the FEs (time) and y-axis denotes $\log(f(\textbf{x}))$.}
			\end{minipage}%
\end{figure}

\subsection{Hyper-parameter tuning for multi-class SVMs}
Given a set of labelled data $\{\textbf{x}_i,y_i\}_{i=1}^l$, where $\textbf{x}_i \in R^n$, the classification task is to train a classifier in terms of $\{\textbf{x}_i,y_i\}_{i=1}^l$ to predict the labels of incoming data.
Support Vector Machines (SVMs) \cite{vapnik1998statistical} is often considered as a family of powerful tools for classification.
Here the SVM with linear kernel is considered.
Let $\textbf{y}\in\{-1,+1\}$, SVM requires to fine-tune three parameters $\textbf{w},b,\lambda$ by solving the following optimization problem:
${\min}_{\textbf{w},b,\lambda} \frac{1}{2}\textbf{w}^T\textbf{w}+\lambda\sum^l_{i=1}\xi_i$, subject to 
$y_i(\textbf{w}^T\textbf{x}_i+b) \geq 1-\xi_i \land \xi_i \geq 0$.
Notice that, $\lambda>0$ is a hyper-parameter supplied by the user, which penalizes the error vector $\boldsymbol{\xi}$.
When dealing with a multi-class classification problem, a typical idea is to divide it into multiple binary classification problems by adopting the one-on-one strategy.
Let $K$ be the number of class, then we have ${K \choose 2}=\frac{K(K-1)}{2}$ binary classifiers to train, resulting in $\frac{K(K-1)}{2}$ hyper-parameters to tune.

Of course, a simple strategy of specifying the same value of $\lambda$ for all binary classifiers works well \cite{CC01a}.
It is also an intuition that varied $\boldsymbol{\lambda}$ can facilitate a multi-class SVM better.
On this basis, a potentially high-dimensional optimization problem needs to be solved for more advanced performance.
Recall that, a final output of a multi-class SVM is based on the majority voting.
Due to the overfitting risk on each binary classifier, the votes may introduce interdependencies in between.
To sum up, the problem of hyper-parameter tuning for multi-class SVMs is non-separable and high-dimensional in essence.

We thus apply the proposed DAC-HC to deal with it. 
RESOO and CMA-ES are again included as the compared algorithms. 
Since the ideal decomposition is no longer applicable for this problem, DECC-I will not be compared with DAC-HC.
The grid search is also tested but on the assumption that all binary classifiers share the same value of $\lambda$.
That is , the grid search actually solves a 1-dimensional problem.
Three data sets, i.e., \textit{usps} \cite{hull1994database}, \textit{news20} \cite{lang1995newsweeder}, and \textit{letter} \cite{hsu2002comparison}, are used for comparison, which contain 10, 20, 26 classes in each and yield three problems with 45, 190, 325 dimensions, respectively. 
All the features in each dataset are scaled to $[-1,1]$ or $[0,1]$.
The solution space for each binary classifier is bounded as $[10^{-3},10^2]$. 
All algorithms are repeated for 20 runs on each problem, except the deterministic grid search. 
For each run, the hyper-parameters are tuned on the training set with the 5-fold cross-validation.
The higher the accuracy is, the better the hyper-parameters are supposed to be.
The best hyper-parameters obtained in a run will be tested on the testing set, and the testing accuracy is regarded as the performance of an algorithm in a run. 
The time budget for the training phase in each run is set to 100 FEs.
For RESOO, the probability $\eta$ is set to 1/3, the restart times is set to 2, and the reduced dimensionality is set to 1/5 to the original dimensionality, i.e., 9, 38, 65, respectively.
For the grid search, 100 candidate solutions are uniformly selected over the solution space.
For DAC-HC, the parameters $N$ and $M$ are set to 2 and 3, respectively.

{\renewcommand\baselinestretch{1}\selectfont
\begin{table}[tbp]\footnotesize
\centering  
\caption{Testing accuracies of tuned multi-class SVM.}

\begin{tabular}{cccccc}
\hline\hline
\textbf{DataSet} &\textbf{GridSearch} &\textbf{RESOO} &\textbf{CMA-ES} &\textbf{DAC-HC}\\ \hline

\multirow{2}*{\textit{usps}} 
&93.92$\%$  &\cellcolor{lightgray} 94.38$\%$  &93.33$\%$  &\cellcolor{gray} 94.60$\%$  \\ 
&-    &$\pm$0.37$\%$  &$\pm$0.24$\%$   &$\pm$0.04$\%$  \\ 

\multirow{2}*{\textit{news20}} 
&85.16$\%$  &\cellcolor{gray} 85.43$\%$  &84.34$\%$  &\cellcolor{lightgray} 85.40$\%$  \\ 
&-    &$\pm$0.21$\%$  &$\pm$0.17$\%$   &$\pm$0.11$\%$  \\ 

\multirow{2}*{\textit{letter}} 
&85.12$\%$ &\cellcolor{lightgray} 85.36$\%$  &84.03$\%$  &\cellcolor{gray} 85.72$\%$  \\ 
&-    &$\pm$0.12$\%$  &$\pm$1.44$\%$   &$\pm$0.24$\%$  \\  

 \hline
\end{tabular}
\end{table}
\par}

Table 3 lists the mean and standard derivation of the testing accuracies of the multi-class SVMs tuned by each algorithm.
It can be seen that, both RESOO and DAC-HC outperform the grid search, although the improvement is marginal.
Hence, it can be inferred that tuning multiple hyper-parameters, rather than one shared hyper-parameter, is beneficial to the multi-class SVMs.
DAC-HC outperforms all the compared algorithms on the $usps$ and $letter$ datasets.
Although it shows slightly lower accuracy than RESOO on the $news20$ dataset, a more stable behavior is observed as its standard derivation is smaller.
CMA-ES is inferior to the grid search on all three problems.
This phenomenon suggests that, for tuning multiple hyper-parameters for multi-class SVMs, an appropriate optimization approach should be employed.
Otherwise, it would be better to tune just one shared hyper-parameter.
It is also worthwhile to notice that, CMA-ES does not adopt any special treatment for high-dimensional optimization.

\section{Conclusions and Future Directions}
This work investigated the Divide and Conquer idea on high-dimensional black-box optimization problems.
We found that the interdependent sub-problems after decomposition actually cannot be accurately conquered.
Instead, we proposed the Divide and Approximate Conquer (DAC) to solve each sub-problem approximately.
The convergence of DAC was proved and empirically supported.
For empirical studies, a simple instantiation of DAC, i.e., DAC-HC, was also proposed.
The advantages of DAC-HC over existing representative approaches were verified on two sets of non-separable high-dimensional problems.

For future work, we are interested in:

\begin{itemize}
\item Promoting the ability of DAC by adopting more advanced sub-problems optimizers.
\item Theoretically analyzing the convergence rate of DAC.  
\end{itemize}


\appendix

\bibliographystyle{named}
\bibliography{ijcai16}

\begin{thebibliography}{}

\bibitem[\protect\citeauthoryear{Chang and Lin}{2011}]{CC01a}
Chih-Chung Chang and Chih-Jen Lin.
\newblock {LIBSVM}: A library for support vector machines.
\newblock {\em ACM Transactions on Intelligent Systems and Technology},
  2:27:1--27:27, 2011.

\bibitem[\protect\citeauthoryear{Chen \bgroup \em et al.\egroup
  }{2010}]{chen2010large}
Wenxiang Chen, Thomas Weise, Zhenyu Yang, and Ke~Tang.
\newblock Large-scale global optimization using cooperative coevolution with
  variable interaction learning.
\newblock In {\em Parallel Problem Solving from Nature, PPSN XI}, pages
  300--309. Springer, 2010.

\bibitem[\protect\citeauthoryear{Friesen and
  Domingos}{2015}]{friesen2015recursive}
Abram~L Friesen and Pedro Domingos.
\newblock Recursive decomposition for nonconvex optimization.
\newblock In {\em Proceedings of the 24th International Joint Conference on
  Artificial Intelligence}, 2015.

\bibitem[\protect\citeauthoryear{Hansen and
  Ostermeier}{2001}]{hansen2001completely}
Nikolaus Hansen and Andreas Ostermeier.
\newblock Completely derandomized self-adaptation in evolution strategies.
\newblock {\em Evolutionary computation}, 9(2):159--195, 2001.

\bibitem[\protect\citeauthoryear{Hsu and Lin}{2002}]{hsu2002comparison}
Chih-Wei Hsu and Chih-Jen Lin.
\newblock A comparison of methods for multiclass support vector machines.
\newblock {\em IEEE Transactions on Neural Networks}, 13(2):415--425, 2002.

\bibitem[\protect\citeauthoryear{Hull}{1994}]{hull1994database}
Jonathan~J Hull.
\newblock A database for handwritten text recognition research.
\newblock {\em IEEE Transactions on Pattern Analysis and Machine Intelligence},
  16(5):550--554, 1994.

\bibitem[\protect\citeauthoryear{Jebalia \bgroup \em et al.\egroup
  }{2008}]{jebalia2008log}
Mohamed Jebalia, Anne Auger, and Pierre Liardet.
\newblock Log-linear convergence and optimal bounds for the (1+ 1)-{ES}.
\newblock In {\em Artificial Evolution}, pages 207--218. Springer, 2008.

\bibitem[\protect\citeauthoryear{Kab{\'a}n \bgroup \em et al.\egroup
  }{2015}]{kaban2015toward}
Ata Kab{\'a}n, Jakramate Bootkrajang, and Robert~John Durrant.
\newblock Toward large-scale continuous eda: A random matrix theory
  perspective.
\newblock {\em Evolutionary computation}, 2015.

\bibitem[\protect\citeauthoryear{Kern \bgroup \em et al.\egroup
  }{2004}]{kern2004learning}
Stefan Kern, Sibylle~D M{\"u}ller, Nikolaus Hansen, Dirk B{\"u}che, Jiri
  Ocenasek, and Petros Koumoutsakos.
\newblock Learning probability distributions in continuous evolutionary
  algorithms--a comparative review.
\newblock {\em Natural Computing}, 3(1):77--112, 2004.

\bibitem[\protect\citeauthoryear{Kirkpatrick \bgroup \em et al.\egroup
  }{1983}]{Kirkpatrick671}
S.~Kirkpatrick, C.~D. Gelatt, and M.~P. Vecchi.
\newblock Optimization by simulated annealing.
\newblock {\em Science}, 220(4598):671--680, 1983.

\bibitem[\protect\citeauthoryear{Lang}{1995}]{lang1995newsweeder}
Ken Lang.
\newblock Newsweeder: Learning to filter netnews.
\newblock In {\em Proceedings of the 12th international conference on machine
  learning}, pages 331--339, 1995.

\bibitem[\protect\citeauthoryear{Mahdavi \bgroup \em et al.\egroup
  }{2015}]{mahdavi2015metaheuristics}
Sedigheh Mahdavi, Mohammad~Ebrahim Shiri, and Shahryar Rahnamayan.
\newblock Metaheuristics in large-scale global continues optimization: A
  survey.
\newblock {\em Information Sciences}, 295:407--428, 2015.

\bibitem[\protect\citeauthoryear{Omidvar \bgroup \em et al.\egroup
  }{2014}]{omidvar2014cooperative}
Mohammad~Nabi Omidvar, Xiaodong Li, Yi~Mei, and Xin Yao.
\newblock Cooperative co-evolution with differential grouping for large scale
  optimization.
\newblock {\em IEEE Transactions on Evolutionary Computation}, 18(3):378--393,
  2014.

\bibitem[\protect\citeauthoryear{Qian and Yu}{2016}]{qian2016scaling}
Hong Qian and Yang Yu.
\newblock Scaling simultaneous optimistic optimization for high-dimensional
  non-convex functions with low effective dimensions.
\newblock In {\em Proceedings of the 30th AAAI Conference on Artificial
  Intelligence (AAAI 2016)}, Phoenix, AZ, 2016.

\bibitem[\protect\citeauthoryear{Tang \bgroup \em et al.\egroup
  }{2009}]{Tang09benchmarkfunctions}
Ke~Tang, Xiaodong Li, P.~N. Suganthan, Zhenyu Yang, and Thomas Weise.
\newblock Benchmark functions for the cec’2010 special session and
  competition on large-scale global optimization.
\newblock Technical report, Nature Inspired Computation and Applications
  Laboratory, 2009.

\bibitem[\protect\citeauthoryear{Tang \bgroup \em et al.\egroup
  }{2016}]{tang2016negatively}
Ke~Tang, Peng Yang, and Xin Yao.
\newblock Negatively correlated search.
\newblock {\em IEEE Journal on Selected Areas in Communications},
  34(3):542--550, March 2016.

\bibitem[\protect\citeauthoryear{Vapnik}{1998}]{vapnik1998statistical}
Vladimir Vapnik.
\newblock {\em Statistical learning theory}, volume~1.
\newblock Wiley New York, 1998.

\bibitem[\protect\citeauthoryear{Wang \bgroup \em et al.\egroup
  }{2013}]{wang2013bayesian}
Ziyu Wang, Masrour Zoghi, Frank Hutter, David Matheson, and Nando De~Freitas.
\newblock Bayesian optimization in high dimensions via random embeddings.
\newblock In {\em Proceedings of the Twenty-Third international joint
  conference on Artificial Intelligence (IJCAI 2013)}, pages 1778--1784,
  Beijing, China, 2013. AAAI Press.

\bibitem[\protect\citeauthoryear{Yang \bgroup \em et al.\egroup
  }{2008a}]{yang2008large}
Zhenyu Yang, Ke~Tang, and Xin Yao.
\newblock Large scale evolutionary optimization using cooperative coevolution.
\newblock {\em Information Sciences}, 178(15):2985--2999, 2008.

\bibitem[\protect\citeauthoryear{Yang \bgroup \em et al.\egroup
  }{2008b}]{yang2008self}
Zhenyu Yang, Ke~Tang, and Xin Yao.
\newblock Self-adaptive differential evolution with neighborhood search.
\newblock In {\em IEEE Congress on Evolutionary Computation, 2008 (IEEE World
  Congress on Computational Intelligence)}, pages 1110--1116. IEEE, 2008.

\end{thebibliography}

\end{document}